\newtheorem{theorem}{Theorem}
\title{Context-aware Health Event Prediction via Transition Functions on Dynamic Disease Graphs}
\author{
    Chang Lu, Tian Han, Yue Ning
}
\newcommand{\retain}{{RETAIN}}
\newcommand{\deepr}{{Deepr}}
\newcommand{\gram}{{GRAM}}
\newcommand{\dipole}{{Dipole}}
\newcommand{\timeline}{{Timeline}}
\newcommand{\gbert}{{G-BERT}}
\newcommand{\hitanet}{{HiTANet}}
\newcommand{\cgl}{{CGL}}
\def\modelname{{Chet}}
\begin{document}

\maketitle

\begin{abstract}
   With the wide application of electronic health records (EHR) in healthcare facilities, health event prediction with deep learning has gained more and more attention. A common feature of EHR data used for deep-learning-based predictions is historical diagnoses. Existing work mainly regards a diagnosis as an independent disease and does not consider clinical relations among diseases in a visit. Many machine learning approaches assume disease representations are static in different visits of a patient. However, in real practice, multiple diseases that are frequently diagnosed at the same time reflect hidden patterns that are conducive to prognosis. Moreover, the development of a disease is not static since some diseases can emerge or disappear and show various symptoms in different visits of a patient. To effectively utilize this combinational disease information and explore the dynamics of diseases, we propose a novel context-aware learning framework using transition functions on dynamic disease graphs. Specifically, we construct a global disease co-occurrence graph with multiple node properties for disease combinations. We design dynamic subgraphs for each patient's visit to leverage global and local contexts. We further define three diagnosis roles in each visit based on the variation of node properties to model disease transition processes. Experimental results on two real-world EHR datasets show that the proposed model outperforms state of the art in predicting health events.
\end{abstract}


\maketitle

\section{Introduction}
\label{sec:intro}
Electronic health records (EHR) have been widely applied in healthcare facilities as a system to record patients' visit information. EHR provide valuable data sources for researchers to predict health events, such as diagnosis~\cite{choi2017gram}, mortality~\cite{darabi2019taper}, and readmission~\cite{Phuoc2017deepr}. These kinds of health event predictions are beneficial to both healthcare providers and patients to achieve preventative health monitoring and personalized care plans. A common approach in deep-learning-based health event predictions is mining temporal features of a patient, especially patterns of historical diagnoses, to predict future risks. Although deep learning models such as recurrent neural networks (RNN)~\cite{bai2018interpretable}, convolutional neural networks (CNN)~\cite{Phuoc2017deepr}, and attention-based models~\cite{shang2019pretrain} have shown great success in health event predictions, there are still several challenges in learning with diagnosis features.

\begin{figure}
    \centering
    \includegraphics[width=0.9\linewidth]{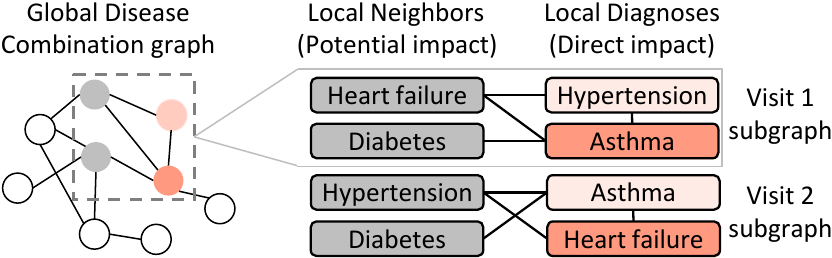}
    \caption{An example of a disease combination graph of an EHR dataset and subgraphs of two visits. Nodes are diseases. Edges denote disease co-occurrence. The red and gray nodes represent diagnoses and neighbors in a visit. White nodes are other diseases. Darker red nodes denote higher-priority diagnoses.}
    \label{fig:disease_example}
\end{figure}

\paragraph{Q1: How to effectively utilize disease combination information?} In medical practice, disease combinations refer to a group of diseases that are diagnosed in the same patient such as hypertension and heart failure. Disease combinations appearing in all patients' visits naturally form a global graph structure, as shown in \figurename~{\ref{fig:disease_example}} (left). This structure reflects hidden patterns among diseases. A visit can be regarded as local subgraphs containing diagnoses and their neighbors in the combination graph. Neighbor nodes refer to disease that are not diagnosed in the current visit for the current patient, but diagnosed in other patients' visits. The visit subgraphs enable us to estimate prognosis for future visits from neighbors. For example, in \figurename~{\ref{fig:disease_example}} (right), we may successfully predict heart failure in visit 2 given the relations among heart failure, hypertension, and asthma in the combination graph. However, this graph structure is not utilized in popular deep learning models for health event predictions including \gram~\cite{choi2017gram}, \timeline~\cite{bai2018interpretable}, and  \gbert~\cite{shang2019pretrain}.

\paragraph{Q2: How to explore the dynamic scheme of diseases?} During the development stage of a disease, the impact of this disease on a patient may not be static. EHR datasets like MIMIC-III~\cite{mimiciii} provide an indicator for diagnosis priority in each visit. It is common that the same diagnosis has different priorities in different visits. Moreover, even if a disease does not appear in a previous visit, it can also be diagnosed in future visits, especially when it is a neighbor of an existing diagnosis in the disease combination graph. As a result, a semantic context for reflecting the development scheme of diseases is implied by diagnoses and neighbors in visit sequences and the transition from neighbors to diagnoses. \figurename~\ref{fig:disease_example} (right) shows an example of diagnoses and neighbors in a disease combination graph with different impacts in two visits. In visit 1, asthma is the major diagnosis with the highest priority, while the major diagnosis becomes heart failure in visit 2 since it has more severe symptoms. In addition, we can observe that heart failure turns into a diagnosis in visit 2 from a neighbor in visit 1. It indicates that neighbors in a visit can also have a potential impact on patients in future visits. Therefore, it is necessary to explore the development scheme of diseases by dynamically representing diseases in different visits and learning the transition process of diseases from neighbors to diagnoses.

To address such challenges, we propose {\modelname}, a novel \textbf{c}ontext-aware \textbf{h}ealth \textbf{e}vent prediction framework via \textbf{t}ran-sition functions on dynamic disease graphs. To utilize disease combination information, we first construct a weighted global disease combination graph with multiple node properties based on the historical diagnoses of all patients. For each visit of a patient, we design dynamic subgraphs to integrate local context from diseases in this visit and global context from the entire EHR dataset. Then, to explore the development scheme of diseases, for each visit, we define multiple
diagnosis roles based on the variation of node properties in dynamic subgraphs to represent disease transition processes. We further design corresponding transition functions for each role to extract historical contexts. Finally, we integrate all visits of a patient and adopt an attention-based method to predict future health events. The main contributions of this work are summarized as follows:
\begin{itemize}
    \item We design a context-aware dynamic graph learning method to learn disease combinations. The proposed global disease graph and visit subgraphs can integrate global and local context from disease combinations.
    \item We propose a disease-level temporal learning to explore disease development schemes. Three diagnosis roles and corresponding transition functions can extract historical context and learn the transition process of diseases.
    \item We conduct comprehensive experiments on two real-world EHR datasets to show the improvement of {\modelname} over the state-of-the-art models on prediction accuracy.
\end{itemize}

\section{Problem Formulation}
\label{sec:method_problem}
EHR contain temporal patient records of visits to health facilities. A key record type in EHR is diagnosis. In each visit, a patient is diagnosed with one or multiple diseases, represented by medical codes. The medical codes are typically predefined by modern disease classification systems, such as ICD-9-CM or ICD-10. For example, ``left heart failure'' has a code of 428.1 in ICD-9-CM. In an EHR dataset, we denote the code collection as $\mathcal{C} = \{ c_1, c_2, \dots, c_d \}$, where $d$ is the code number. For a patient $u \in \mathcal{U}$, the diagnoses in the $t$-th visit are defined as a multi-hot column vector $\mathbf{m}^t \in \{0, 1\}^{d}$. Here, $\mathcal{U}$ is the patient collection in the EHR dataset, $\mathbf{m}^t_i = 1$ means $u$ is diagnosed with $c_i$ in $t$-th visit, and $t = 1, 2, \dots, T$, where $T$ is the visit number of $u$.

\paragraph{EHR dataset.}
Let $r_u = (\mathbf{m}^1, \mathbf{m}^2, \dots, \mathbf{m}^T)$ be a visit sequence containing patient $u$'s diagnoses. The EHR dataset is defined as $\mathcal{D} = \{ r_u \mid u \in \mathcal{U} \}$.

\paragraph{Health event prediction.}
Given an EHR dataset $\mathcal{D}$, a patient $u$ and $u$'s previous diagnoses $r_u$, health event prediction is to predict an event $\mathbf{y}^{T + 1}$ of the future visit $T + 1$.

Common health event predictions include diagnosis prediction and heart failure prediction. For example, the ground-truth of diagnosis prediction  in the visit $T + 1$ is the diagnoses $\mathbf{y}^{T + 1} \in \{0, 1\}^d$.

\section{Methodology}
\label{sec:method}

\begin{figure}
    \centering
    \includegraphics[width=\linewidth]{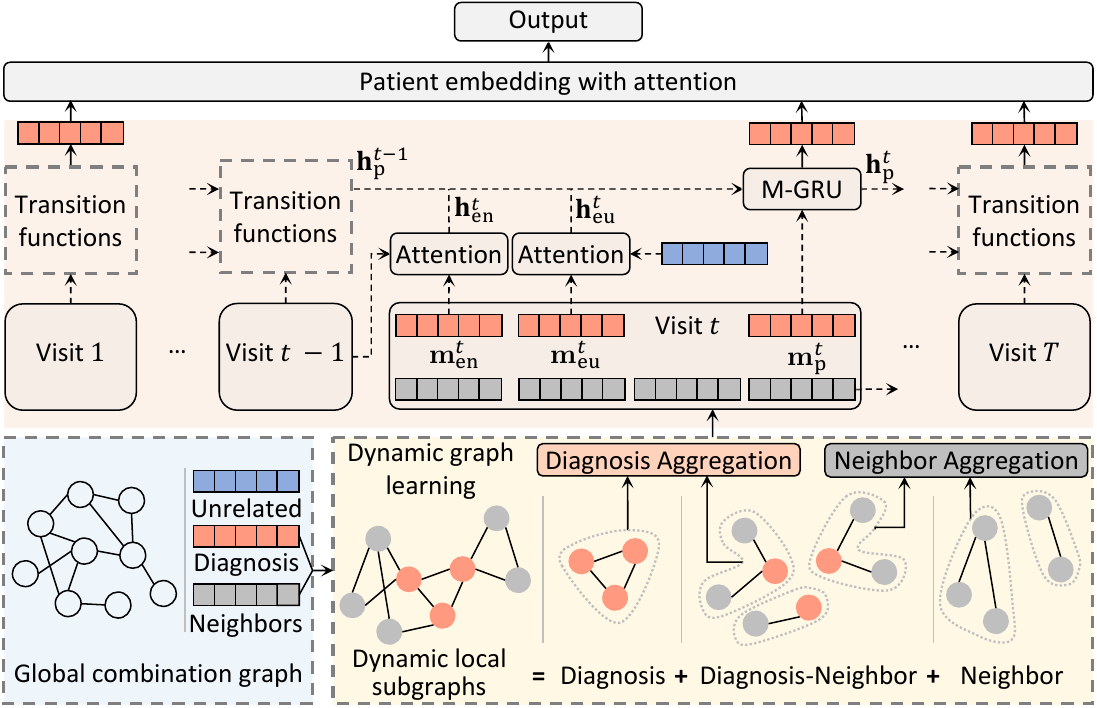}
    \caption{An overview of the proposed model. It includes a global combination graph for all diseases, a dynamic graph learning module for a visit, a dynamic temporal learning with transition functions for all visits, and an attention methods to calculate the patient embedding.}
    \label{fig:system}
\end{figure}

In this section, we demonstrate the details of {\modelname}. An overview of {\modelname} is shown in \figurename~\ref{fig:system}.

\subsection{Context-aware Dynamic Graph Learning}
\label{sec:method_graph}
In healthcare, it is common that a patient is diagnosed with a specific combination of diseases, such as diabetes and hypertension. This combination occurs because diseases share similar causes and risk factors. In practice, disease combinations are typically reflected by their co-occurrence. Therefore, we create a global co-occurrence graph $\mathcal{G}$ for all diseases with weighted edges. A node of $\mathcal{G}$ is a code $c$ in $\mathcal{C}$. If a code pair $(c_i, c_j)$ co-occur in a patient' visit record, we add two edges $\overrightarrow{(i, j)}$ and $\overleftarrow{(i, j)}$ with different weights to $\mathcal{G}$. Then we count the total co-occurrence frequency $f_{ij}$ of $(c_i, c_j)$ in all patients' visits for further calculation of edge weights. Here, we use two weighted edges for each pair of nodes because we conjecture that two diseases should not have equal influence on each other. One disease can be a common combination of the other, but not vice versa. In addition, we want to detect important and common disease pairs. Therefore, we define a threshold $\delta$ to filter out combinations with low frequency and get a qualified set $\boldsymbol{\Delta}_i = \{c_j \mid \frac{f_{ij}}{\sum_{j=1}^{d}{f_{ij}}} \ge \delta\}$ for $c_i$. Let $q_i = \sum_{c_j \in \boldsymbol{\Delta}_i}{f_{ij}}$ be the total frequency of qualified diseases co-occurred with $c_i$, we use an adjacency matrix $\mathcal{A} \in \mathbb{R}^{d \times d}$ to represent $\mathcal{G}$:
\begin{align}
    \mathcal{A}_{ij} = \begin{cases}
        \hfil 0 &~\text{if $i = j$ or $\frac{f_{ij}}{\sum_{j=1}^{d}{f_{ij}}} < \delta$}, \\
        \frac{f_{ij}}{q_i} &~\text{otherwise.} \\
    \end{cases}
\end{align}
Note that $\mathcal{A}$ is designed to be asymmetric to represent different influence of two diseases. As a static graph, $\mathcal{A}$ measures global co-occurrence frequencies of diseases. However, diseases may emerge and disappear at different stages of a treatment. Even though a disease is not diagnosed in the current visit, it may appear in the future due to the development of an existing diagnosis. Therefore, for each visit $t$, we consider three dynamic subgraphs of $\mathcal{G}$:
\begin{itemize}
    \item \textbf{Local diagnosis graph} $\mathcal{G}^t_{D}$. It is a complete graph consisting of diagnoses in visit $t$. We use an adjacency matrix $\mathcal{M}^t \in \mathbb{R}^{d \times d}$ to represent $\mathcal{G}^t_{D}$. $\mathcal{M}^t_{ij} = \mathcal{A}_{ij}$ if $c_i$ and $c_j$ are diagnosed in  visit $t$. Otherwise $\mathcal{M}^t_{ij} = 0$.
    \item \textbf{Global diagnosis-neighbor graph} $\mathcal{G}^t_{DN}$. It is a bipartite graph describing the connection of diagnoses in visit $t$ and their neighbors (connected diseases but not diagnosis in visit $t$) in $\mathcal{G}$. We use an adjacency matrix $\mathcal{B}^t \in \mathbb{R}^{d \times d}$ to represent connections from diagnoses to neighbors. $\mathcal{B}^t_{ij} = \mathcal{A}_{ij}$ if $c_i$ is diagnosed in  visit $t$ and $c_j$ is a neighbor of $c_i$ in $\mathcal{G}$ and $c_j$ is not diagnosed in  visit $t$. Otherwise $\mathcal{B}^t_{ij} = 0$. For opposite connections, since $\mathcal{A}$ is asymmetric, the transpose of $\mathcal{B}^t$ cannot be used. We use another matrix $\hat{\mathcal{B}}^t$ instead to denote neighbor-diagnosis edges. It is calculated in a similar manner to $\mathcal{B}^t_{ij}$.
    \item \textbf{Global neighbor graph} $\mathcal{G}^t_N$. It is a graph for neighbors in visit $t$. We use an adjacency matrix $\mathcal{N}^t \in \mathbb{R}^{d \times d}$ to represent $\mathcal{G}^t_{N}$. $\mathcal{N}^t_{ij} = \mathcal{A}_{ij}$ if $c_i$ and $c_j$ are two neighbors and are not diagnosed in visit $t$. Otherwise $\mathcal{N}^t_{ij} = 0$.
\end{itemize}
It is worth noting that the corresponding subgraphs of two visits are different unless the two visits have all the same diagnoses. Moreover, for one diagnosis, if it appears in two visits, its neighbors in these two visits can also be different, because some neighbors in a visit may be diagnosed in the other visit and cannot be neighbors.

As we discussed above, disease combinations reflect important disease relationships. In each visit, a diagnosis and its neighbors can both have an influence on future visits. Therefore, we use a graph neural network (GNN) to learn the combinations. An intuitive way is to assign each disease with a universal embedding vector. However, when a disease is a diagnosis, it has a direct impact on this patient, such as causing various symptoms. When this disease is a neighbor, it may only have a hidden impact on future visits of this patient. Therefore, we use two embedding matrices $\mathbf{M}, \mathbf{N} \in \mathbb{R}^{d \times s}$ with size $s$ to represent diseases when they are diagnoses and neighbors, respectively. For diseases not appearing in current diagnoses nor their direct neighbors, we name them unrelated diseases and use another embedding matrix $\mathbf{R} \in \mathbb{R}^{d \times s'}$ to represent them. We conjecture that their impact on future visits is less than direct neighbors.

In a graph layer, we extract both local and global contexts for diagnoses and neighbors in visit $t$:
\begin{itemize}
    \item \textbf{Local context.} For each diagnosis node, we aggregate other diagnoses' embeddings from $\mathcal{G}^t_{D}$ as local context.
    \item \textbf{Diagnosis global context.} For each diagnosis node, We aggregate the embeddings of connected neighbors from $\mathcal{G}^t_{DN}$ as diagnosis global context.
    \item \textbf{Neighbor global context.} For each neighbor node, we aggregate the embeddings of connected diagnosis nodes from $\mathcal{G}^t_{DN}$ and aggregate connected neighbor nodes from $\mathcal{G}^t_N$ as neighbor global context.
\end{itemize}
Then, for each node, we add the corresponding context to the embedding of this node as message aggregation in GNN. Let $\mathbf{n}^t \in \{0, 1\}^d$ be a multi-hot vector which denotes all neighbors in visit $t$. The graph layer is summarized as follows:
\begin{gather}
\label{eq:graph1}
    \mathbf{Z}^t_D = \mathbf{m}^t \odot \mathbf{M} + \underbrace{\mathcal{M}^t\mathbf{M}}_{\substack{\text{Local context}}} + \underbrace{\mathcal{B}^t\mathbf{N}}_{\substack{\text{Diagnosis} \\ \text{global context}}} \in \mathbb{R}^{d \times s}, \\
\label{eq:graph2}
    \mathbf{Z}^t_N = \mathbf{n}^t \odot \mathbf{N} + \underbrace{\mathcal{N}^t\mathbf{N} + \hat{\mathcal{B}}^t\mathbf{M}}_{\substack{\text{Neighbor global context}}} \in \mathbb{R}^{d \times s}.
\end{gather}
Here, $\odot$ denotes the element-wise multiplication. $\mathbf{m}^t \odot \mathbf{M}\in \mathbb{R}^{d\times s}$ selects the rows in $\mathbf{M}$ that are corresponding to the diagnoses in $\mathbf{m}^t $ and sets other rows to be 0. Finally, in visit $t$, the hidden embeddings of diagnoses and neighbors, i.e., GNN outputs, are calculated with a fully connected layer:
\begin{align}
    \label{eq:graph_act}
    \mathbf{H}^t_{\{D,N\}} = \text{LeakyReLU}\left(\mathbf{Z}^t_{\{D,N\}}\mathbf{W}\right) \in \mathbb{R}^{d \times s'}.
\end{align}
Here, $\mathbf{W} \in \mathbb{R}^{s \times s'}$ is a weight matrix. We adopt LeakyReLU \cite{xu2015empirical} as the activation function. Note that, we only use one graph layer because only one-hop connections are considered in visit subgraphs.

\subsection{Subgraphs' Adjacency Matrix Calculation}
\label{sec:method_mem}
In our settings, the dimension of $\mathcal{M}^t, \mathcal{B}^t$, $\hat{\mathcal{B}}^t$, and $\mathcal{N}^t$ is ${d \times d}$. In real practice, there may exist storage/memory problems in Equations~(\ref{eq:graph1}) and (\ref{eq:graph2}) even with sparse matrix:
\begin{itemize}
    \item \textbf{Storing the entire EHR dataset}. The dimension of the entire EHR dataset is $|\mathcal{U}| \times T \times d \times d$, if we store adjacency matrices for all visits. When $|\mathcal{U}|$ and $d$ increase, the size of the dataset will increase rapidly.
    \item \textbf{Training models with mini-batches}. When training a deep learning model, assuming the batch size is $b$, we have to load four batched matrices, $\mathcal{M}^t, \mathcal{B}^t$, $\hat{\mathcal{B}}^t$, and $\mathcal{N}^t$ whose dimensions are $b \times T \times d \times d$, into CPU/GPU memory, but it is usually not applicable when $d$ is large.
\end{itemize}
To address these problems, we replace the four matrices with the diagnosis/neighbor vectors $\mathbf{m}^t, \mathbf{n}^t$ and the static adjacency matrix $\mathcal{A}$. The memory-efficient calculation for Equations~(\ref{eq:graph1}) and (\ref{eq:graph2}) is summarized as follows:
\begin{align}
    \mathbf{Z}^t_D &= \mathbf{m}^t \odot (\mathbf{M} + \mathcal{A}(\mathbf{m}^t \odot \mathbf{M}) + \mathcal{A}(\mathbf{n}^t \odot \mathbf{N})), \\
    \mathbf{Z}^t_N &= \mathbf{n}^t \odot (\mathbf{N} + \mathcal{A}(\mathbf{n}^t \odot \mathbf{N}) + \mathcal{A}(\mathbf{m}^t \odot \mathbf{M})).
\end{align}
This optimization reduces the dimension of $d \times d$ to $d \times s$ for the four matrices, where $s$ is the embedding size for diseases and $s \ll d$.  Detailed derivation and proofs can be found in Appendix: Subgraphs' Adjacency Matrix Calculation.

\subsection{Disease-level Temporal Learning with Transition Functions}
\label{sec:temporal}
A core function of deep learning models for healthcare is exploring temporal features for visit sequences to learn previous diagnoses and predict future events. For example, if a patient has a chronic disease like chronic heart failure in a visit, it is highly possible that this disease will last for a long term and be diagnosed in future visits. In addition, even if a disease is not diagnosed in a visit, it may also appear resulting from an existing diagnosis. For example, longstanding hypertension can ultimately lead to heart failure~\cite{messerli2017transition}. Therefore, for the diagnosis vector $\mathbf{m}^t$ in visit $t$ when $t \ge 2$, we further define three diagnosis roles by dividing $\mathbf{m}^t$ into three disjoint parts to represent longstanding and new-onset diseases:
\begin{enumerate}
    \item \textbf{Persistent diseases} $\mathbf{m}^t_{\text{p}} = \mathbf{m}^t \wedge \mathbf{m}^{t - 1} \in \{0, 1\}^d$: Diagnoses in visit $t$ that are also diagnoses in visit $t - 1$.
    \item \textbf{Emerging neighbors} $\mathbf{m}^t_{\text{en}} = \mathbf{m}^t \wedge \mathbf{n}^{t - 1} \in \{0, 1\}^d$: Diagnoses in visit $t$ that are neighbors in visit $t - 1$.
    \item \textbf{Emerging unrelated diseases} $\mathbf{m}^t_{\text{eu}} = \mathbf{m}^t \wedge \lnot(\mathbf{m}^{t - 1} \lor \mathbf{n}^{t - 1})$ $\in \{0, 1\}^d$: Diagnoses in visit $t$ that are unrelated diseases in visit $t - 1$.
\end{enumerate}
Here, $\wedge, \lor, \lnot$ are element-wise logical conjunction, disjunction, and negation for multi-hot vectors.

In temporal learning, previous methods mainly think diseases are static. They aggregate diagnoses into a visit embedding~\cite{choi2017gram, bai2018interpretable} and use it for temporal learning with an RNN or attention-based method. However, we think that even the same disease in different visits can have different influence on a patient. Therefore, instead of using visit embeddings, we choose to perform temporal learning on diagnoses with the three parts of $\mathbf{m}^t$ and design three \textbf{transition functions} corresponding to each part to extract \textbf{historical context} from previous visits.

For emerging diseases $\mathbf{m}^t_{\text{en}}$ and $\mathbf{m}^t_{\text{eu}}$, they are not continuous transitions and do not directly inherit information from the last diagnoses. We design a scaled dot-product attention~\cite{Vaswani2017attention} as two transition functions to calculate transition outputs $\mathbf{h}_{\text{en}}^t, \mathbf{h}_{\text{eu}}^t \in \mathbb{R}^{d \times p}$, where $p$ is the dimension after attention.
\begin{align}
    \label{eq:trans_1}
    \mathbf{h}_{\text{en}}^t &= \text{Attn}(\mathbf{m}_{\text{en}}^t \odot \mathbf{H}^{t-1}_N, \mathbf{m}_{\text{en}}^t \odot \mathbf{H}^{t-1}_N, \mathbf{m}_{\text{en}}^t \odot \mathbf{H}^t_D), \\
    \label{eq:trans_2}
    \mathbf{h}_{\text{eu}}^t &= \text{Attn}(\mathbf{m}_{\text{eu}}^t \odot \mathbf{R}, \mathbf{m}_{\text{eu}}^t \odot \mathbf{R}, \mathbf{m}_{\text{eu}}^t \odot \mathbf{H}^t_D).
\end{align}
The attention in the above equations is defined as follows:
\begin{align}
    \text{Attn}(\mathbf{Q}, \mathbf{K}, \mathbf{V}) = \text{softmax}\left( \frac{\mathbf{Q}\mathbf{W}_q(\mathbf{K}\mathbf{W}_k)^\top}{\sqrt{a}} \right)\mathbf{V}\mathbf{W}_v. \notag
\end{align}
Here, $a$ is the attention size. $\mathbf{W}_q, \mathbf{W}_k \in \mathbb{R}^{s' \times a}, \mathbf{W}_v \in \mathbb{R}^{s' \times p}$ are attention weights.
For $\mathbf{h}_{\text{en}}^t$, we use hidden neighbor embeddings $\mathbf{H}^{t-1}_N$ from Equation~(\ref{eq:graph_act}) as query $\mathbf{Q}$ and key $\mathbf{K}$. For $\mathbf{h}_{\text{eu}}^t$, we use universal embeddings of unrelated diseases as query and key. For both equations, we use hidden diagnosis embeddings $\mathbf{H}^t_D$ as value $\mathbf{V}$ in attention.

For persistent diseases $\mathbf{m}^t_{\text{p}}$, they directly inherit information from previous diagnoses. We design a modified gated recurrent unit (M-GRU) as the transition function to model continuous features. In visit $t$, the transition output $\mathbf{h}^t_\text{p} \in \mathbb{R}^{d \times p}$ is calculated by M-GRU using hidden embeddings $\mathbf{H}^t_D$ of diagnoses in visit $t$, transition outputs $\mathbf{h}_{\text{en}}^t, \mathbf{h}_{\text{eu}}^t$ of emerging diseases in visit $t$, and the hidden state as well as the M-GRU output $\mathbf{h}^{t - 1}_\text{p} \in \mathbb{R}^{d \times p}$ in visit $t - 1$:
\begin{align}
\label{eq:gru}
    \mathbf{h}^t_\text{p} = \text{M-GRU}\left(\mathbf{m}_\text{p}^t \odot \mathbf{H}^t_D, \mathbf{h}_{\text{en}}^t, \mathbf{h}_{\text{eu}}^t, \mathbf{h}^{t - 1}_\text{p}\right).
\end{align}
Since the original GRU~\cite{cho-etal-2014-learning} is designed for a vector input, we rewrite GRU into a matrix version. Moreover, the persistent diseases in visit $t + 1$ can also be emerging neighbor/unrelated diseases in visit $t$. Therefore, we also need to store the hidden state of emerging diseases in visit $t$ for future usage. The detailed M-GRU in Equation~(\ref{eq:gru}) is summarized as follows:
\begin{align}
    \mathbf{z}^t &= \sigma(\mathbf{m}_\text{p}^t \odot \mathbf{H}^t_D\mathbf{W}_z + \mathbf{h}^{t - 1}_\text{p}\mathbf{U}_z + \mathbf{b}_z), \label{eq:gru_1} \\
    \mathbf{r}^t &= \sigma(\mathbf{m}_\text{p}^t \odot \mathbf{H}^t_D\mathbf{W}_r + \mathbf{h}^{t - 1}_\text{p}\mathbf{U}_r + \mathbf{b}_r), \label{eq:gru_2} \\
    \hat{\mathbf{h}}^t &= \phi(\mathbf{m}_\text{p}^t \odot \mathbf{H}^t_D\mathbf{W}_h + (\mathbf{r}^t \odot \mathbf{h}^{t - 1}_\text{p})\mathbf{U}_h + \mathbf{b}_h), \label{eq:gru_3} \\
    \tilde{\mathbf{h}}^t &= \phi(\mathbf{h}_{\text{en}}^t + \mathbf{h}_{\text{eu}}^t), \label{eq:gru4}\\
    \mathbf{h}^t_\text{p} &= (1 - \mathbf{z}^t) \odot \mathbf{h}^{t - 1}_\text{p} + \mathbf{z}^t \odot \hat{\mathbf{h}}^t + \tilde{\mathbf{h}}^t. \label{eq:gru5}
\end{align}
Here, $\mathbf{W}_{\{z,r,h\}} \in \mathbb{R}^{s' \times p}$ and $\mathbf{U}_{\{z,r,h\}} \in \mathbb{R}^{p \times p}$ are GRU weights. $\mathbf{b}_{\{z,r,h\}} \in \mathbb{R}^p$ are bias. $\sigma$ and $\phi$ denote sigmoid and tanh activation functions. Equations~(\ref{eq:gru4}) and (\ref{eq:gru5}) are the main modification to the original GRU beside the matrix version. In Equation~(\ref{eq:gru4}), we also apply tanh by following the original GRU to the output of emerging diseases and calculate their hidden state $\tilde{\mathbf{h}}^t$. In Equation~(\ref{eq:gru5}), we store the hidden state of emerging diseases to the hidden state $\mathbf{h}^t_\text{p}$ of persistent diseases for the next visit. Note that, since $\mathbf{m}^t_{\{\text{p, en, eu}\}}$ are disjoint, the add operations in Equations~(\ref{eq:gru4}) and (\ref{eq:gru5}) do not add values to the same entry in $\mathbf{h}^t_{\{\text{p, en, eu}\}}$. In addition, if there are no persistent diseases or emerging diseases in visit $t$, we ignore the corresponding part of $\mathbf{h}^{t - 1}_\text{p}, \hat{\mathbf{h}}^t$, $\mathbf{h}_{\text{en}}^t$, or $\mathbf{h}_{\text{eu}}^t$ in Equations~(\ref{eq:gru4}) and (\ref{eq:gru5}).

When $t = 1$, since there are no emerging diseases yet in the first visit, we let $\mathbf{m}_\text{p}^{1} = \mathbf{m}^{1}$ and use the original GRU with an initial hidden state $\mathbf{h}^{0}_\text{p} = \mathbf{0}$ to calculate $\mathbf{h}^{1}_\text{p}$ and $\mathbf{v}^{1}$:
\begin{align}
\label{eq:visit1}
    \mathbf{h}^1_\text{p} &= \text{GRU}\left(\mathbf{m}_\text{p}^1 \odot \mathbf{H}^1_D, \mathbf{h}^0_\text{p}\right).
\end{align}
Note that, the original GRU and M-GRU share the same parameters in Equations~(\ref{eq:gru_1})-(\ref{eq:gru_3}). Then, we use max pooling for the transition output of the three parts and calculate the visit embedding $\mathbf{v}^t$. Since $\mathbf{h}_\text{p}^t$ contains all diagnoses in visit $t$ when $t = 1$ and $t \ge 2$, we employ $\mathbf{h}_\text{p}^t$ for max pooling:
\begin{align}
\label{eq:visit}
    \mathbf{v}^t = \text{max\_pooling}(\mathbf{h}^t_\text{p}) \in \mathbb{R}^{p}.
\end{align}

Finally, we apply a location-based attention~\cite{luong2015effective} to calculate the final hidden representation $\mathbf{o}$ of all visits, i.e., patient embedding:
\begin{align}
    \label{eq:attention}
    \boldsymbol{\alpha} &= \text{softmax}\left( [\mathbf{v}^1, \mathbf{v}^2, \dots, \mathbf{v}^T]\mathbf{W}_{\alpha} \right) \in \mathbb{R}^{T}, \\
    \mathbf{o} &= \boldsymbol{\alpha}[\mathbf{v}^1, \mathbf{v}^2, \dots, \mathbf{v}^T]^\top \in \mathbb{R}^p, \label{eq:attention3}
\end{align}
where $\mathbf{W}_{\alpha} \in \mathbb{R}^{p}$ is a context vector for attention and $\boldsymbol{\alpha}$ is the attention score for  visits. The patient embedding $\mathbf{o}$ will be used in a classifier for final predictions of a specific task.

\section{Experiments}
\label{sec:exp}

\subsection{Experimental Setups}
\label{sec:exp_setup}
\subsubsection{{Tasks}} We use two common tasks to predict health events:
\begin{itemize}
    \item \textbf{Diagnosis prediction.} This task predicts all medical codes, i.e. diagnoses of the visit $T + 1$ given previous $T$ visits. It is a multi-label classification.
    \item \textbf{Heart failure prediction.} This task predicts whether a patient will be diagnosed with heart failure in the visit $T + 1$ given previous $T$ visits.\footnote{The codes of heart failure start with 428 in ICD-9-CM.} It is a binary classification.
\end{itemize}
For both tasks, we use a fully connected layer with a sigmoid activation function as the classifier and a binary cross-entropy loss as the objective function.

The evaluation metrics for the diagnosis prediction are weighted $F_1$ score (w-$F_1)$~\cite{bai2018interpretable} and top $k$ recall (R@$k$)~\cite{choi2016doctor}. w-$F_1$ is a weighted sum of $F_1$ scores for all medical codes. It measures overall effectiveness of predictions on all classes. R@$k$ is an average ratio of desired medical codes in top $k$ predictions by the total number of desired medical codes in each visit. It measures prediction accuracy on a subset of classes.

The evaluation metrics for the heart failure prediction are the area under the receiver operating characteristic curve (AUC) and $F_1$ score since this task is a binary classification on imbalanced test data in our experiments.

\subsubsection{{Datasets}}
We use MIMIC-III~\cite{mimiciii} and MIMIC-IV~\cite{mimiciv} to validate the predictive power of {\modelname}. MIMIC-III contains 7,493 patients with multiple visits ($T \ge 2$) from 2001 to 2012, while there are 85,155 patients in MIMIC-IV with multiple visits from 2008 to 2019. Since there is an overlapped time range between MIMIC-III and MIMIC-IV, we randomly sample 10,000 patients from MIMIC-IV from 2013 to 2019. The statistics of MIMIC-III and MIMIC-IV are shown in Table~\ref{tab:dataset}.

We further randomly split the two datasets based on patients into training/validation/test sets, which contain 6,000/ 493/1,000 patients for MIMIC-III and 8,000/1,000/1,000 for MIMIC-IV, respectively. We use the last visits as labels and adopt the rest as features. The global combination graph $\mathcal{G}$ is constructed based on the feature visits in the training set. For the heart failure prediction task, we set the label as 1 if a patient is diagnosed with heart failure in the last visit. There are 36.70\% positive samples and 63.30\% negative samples in the test set of MIMIC-III, and 15.70\% positive samples and 85.30\% negative samples in the test set of MIMIC-IV.

\begin{table}
    \centering
    \small
    \begin{tabular}{lcccc}
        \toprule
        \textbf{Dataset} & \textbf{MIMIC-III} & \textbf{MIMIC-IV} \\
        \midrule
        \# patients             & 7,493  & 10,000 \\
        Max. \# visit       & 42     & 55 \\
        Avg. \# visit       & 2.66   & 3.66 \\
        \midrule
        \# codes                & 4,880  & 6,102  \\
        Max. \# codes per visit      & 39       & 50 \\
        Avg. \# codes per visit      & 13.06       & 13.38 \\
        \bottomrule
    \end{tabular}
    \caption{Statistics of MIMIC-III and MIMIC-IV datasets}
    \label{tab:dataset}
\end{table}

\begin{table*}
    \centering
    \small
    \begin{tabular}{lcccc|cccc}
        \toprule
        \multirow{2}{*}{\textbf{Models}} & \multicolumn{4}{c}{\textbf{MIMIC-III}} & \multicolumn{4}{c}{\textbf{MIMIC-IV}} \\ \cmidrule{2-9}
        & \multirow{1}{*}{\textbf{w-}$\boldsymbol{F_1}$} & \textbf{R@10}& \textbf{R@20}& \multirow{1}{*}{ \textbf{\# Params}} & \multirow{1}{*}{\textbf{w-}$\boldsymbol{F_1}$} & \textbf{R@10}& \textbf{R@20} & \multirow{1}{*}{ \textbf{\# Params}} \\ 
        \midrule
        \retain    & 20.69 (0.15) & 26.13 (0.18) & 35.08 (0.22) & 2.90M & 24.71 (0.24) & 28.02 (0.47) & 34.46 (0.13) & 3.56M \\
        \deepr     & 18.87 (0.21) & 24.74 (0.25) & 33.47 (0.17) & 1.16M & 24.08 (0.17) & 26.29 (0.25) & 33.93 (0.21) & 1.44M \\
        \gram      & 21.52 (0.10) & 26.51 (0.09) & 35.80 (0.09) & 1.59M & 23.50 (0.11) & 27.29 (0.27) & 36.36 (0.30) & 1.67M \\
        \dipole    & 19.35 (0.33) & 24.98 (0.27) & 34.02 (0.21) & 2.18M & 23.69 (0.24) & 27.38 (0.35) & 35.48 (0.29) & 2.51M \\
        \timeline  & 20.46 (0.18) & 25.75 (0.13) & 34.83 (0.14) & 1.23M & 25.26 (0.30) & 29.00 (0.21) & 37.13 (0.39) & 1.52M \\
        \gbert     & 19.88 (0.19) & 25.86 (0.12) & 35.31 (0.13) & 6.51M & 24.49 (0.20) & 27.16 (0.06) & 35.86 (0.19) & 7.53M \\
        \hitanet   & 21.15 (0.19) & 26.02 (0.25) & 35.97 (0.18) & 3.33M & 24.92 (0.27) & 27.45 (0.33) & 36.37 (0.24) & 3.96M \\
        \cgl       & 21.92 (0.12) & 26.64 (0.30) & 36.72 (0.15) & 1.53M & 25.41 (0.08) & 28.52 (0.42) & 37.15 (0.29) & 1.83M  \\
        \midrule
        \modelname & \textbf{22.63 (0.08)} & \textbf{28.64 (0.13)} & \textbf{37.87 (0.09)} & 2.12M & \textbf{26.35 (0.13)} & \textbf{30.28 (0.09)} & \textbf{38.69 (0.15)} & 2.59M \\
        \bottomrule
    \end{tabular}
    \caption{Diagnosis prediction results on MIMIC-III and MIMIC-IV using w-${F_1}$ (\%) and R@${k}$ (\%).}
    \label{tab:result_code}
\end{table*}

\begin{table*}
    \centering
    \small
    \begin{tabular}{lccc|ccc}
        \toprule
        \multirow{2}{*}{\textbf{Models}} & \multicolumn{3}{c}{\textbf{MIMIC-III}} & \multicolumn{3}{c}{\textbf{MIMIC-IV}} \\  \cmidrule{2-7}
        & \textbf{AUC} & $\boldsymbol{F_1}$ & \textbf{\# Params} & \textbf{AUC} & $\boldsymbol{F_1}$ & \textbf{\# Params} \\
        \midrule
        \retain   & 83.21 (0.26) & 71.32 (0.17) & 1.67M &  89.02 (0.26) & 67.38 (0.21) & 1.99M \\
        \deepr    & 81.36 (0.13) & 69.54 (0.08) & 0.53M &  88.43 (0.18) & 61.36 (0.12) & 0.65M \\
        \gram     & 83.55 (0.19) & 71.78 (0.14) & 0.96M &  89.61 (0.12) & 68.94 (0.19) & 0.88M \\
        \dipole   & 82.08 (0.29) & 70.35 (0.21) & 1.41M &  88.69 (0.24) & 66.22 (0.15) & 1.72M \\
        \timeline & 82.34 (0.31) & 71.03 (0.24) & 0.95M &  87.53 (0.13) & 66.07 (0.21) & 0.73M \\
        \gbert    & 81.50 (0.24) & 71.18 (0.12) & 3.58M &  87.26 (0.12) & 68.04 (0.17) & 3.95M \\
        \hitanet  & 82.77 (0.35) & 71.93 (0.29) & 2.08M &  88.10 (0.28) & 68.21 (0.33) & 2.39M  \\
        \cgl      & 84.19 (0.16) & 71.77 (0.10) & 0.55M &  89.05 (0.15) & 69.36 (0.22) & 0.60M  \\
        \midrule
        \modelname & \textbf{86.14 (0.14)} & \textbf{73.08 (0.09)} & 0.68M & \textbf{90.83 (0.09)} & \textbf{71.14 (0.15)} & 0.88M \\
        \bottomrule
    \end{tabular}
    \caption{Heart failure prediction results on MIMIC-III and MIMIC-IV using AUC (\%) and ${F_1}$ (\%)}
    \label{tab:result_hf}
\end{table*}

\subsubsection{{Baseline Methods}}
To compare {\modelname} with state-of-the-art models, we select the following methods as baselines:
\begin{itemize}
    \item RNN/Attention-based model: \retain~\cite{choi2016retain}, \dipole~\cite{ma2017dipole}, \timeline~\cite{bai2018interpretable}, and \hitanet~\cite{luo2020hitanet}.
    \item CNN-based model: \deepr~\cite{Phuoc2017deepr}.
    \item Graph-based model: \gram~\cite{choi2017gram}, \gbert~\cite{shang2019pretrain}, and \cgl~\cite{lu2021collaborative}.\footnote{We remove pretraining and medication in {\gbert} and clinical notes in CGL to ensure each baseline is trained with the same data.}
\end{itemize}
Detailed descriptions and parameter settings for baselines can be found in Appendix.

\subsubsection{Parameter Settings}
In our experiments, we randomly initialize model parameters. The hyper-parameters as well as activation functions are tuned on the validation set. Specifically, we set the threshold $\delta$ as $0.01$. The embedding size $s$ for $\mathbf{M, N}$ is 48, $s'$ for $\mathbf{R}$ is 32. The attention size $a$ is also 32. The hidden units $p$ of M-GRU and GRU are 256 on MIMIC-III and 350 on MIMIC-IV for the diagnosis prediction task. For the heart failure prediction task, $p$ is 100 on MIMIC-III and 150 on MIMIC-IV. When training {\modelname}, we use 200 epochs and the Adam~\cite{KingmaB14} optimizer. The learning rate is set as 0.01. All programs are implemented using Python 3.8.6 and PyTorch 1.7.1 with CUDA 11.1 on a machine with Intel i9-9900K CPU, 64GB memory, and Geforce RTX 2080 Ti GPU. The source code of {\modelname} can be found at \url{https://github.com/LuChang-CS/Chet/}.

\subsection{Diagnosis and Heart Failure Prediction Results}
We report the mean and standard deviations of evaluation metrics by running each model 5 times with different parameter initializations. We also report the parameter number (\# Params) of each model to evaluate the space complexity.

Table~\ref{tab:result_code} shows the results of diagnosis prediction using w-$F_1$ (\%) and R@$k$ (\%). Since the average diagnosis number in a visit is around 13, we set $k=[10, 20]$ for R@$k$. In Table~\ref{tab:result_code}, {\modelname} outperforms baselines on both datasets. It proves the effectiveness of global and local context with historical information from previous visits. It is worth noting that {\modelname} is better than {\gram} and {\cgl} even without medical ontology graphs used in these two models. It further validates the significance of learning disease combinations and transitions. We notice that {\gbert} does not achieve good w-$F_1$ scores. We infer it is mainly due to removing pretraining in the original model. In addition, all models have better w-$F_1$ and R@$k$ on MIMIC-IV than MIMIC-III. We conjecture the major reason is that MIMIC-IV has a larger training set.

Table~\ref{tab:result_hf} shows the results of heart failure prediction using AUC (\%) and $F_1$ (\%). Like Table~\ref{tab:result_code}, {\modelname} still performs the best. Moreover, we notice that the AUC on MIMIC-IV is higher than MIMIC-III, while the $F_1$ score is lower for all models. We infer that it is mainly because samples are more imbalanced in the test set of MIMIC-IV.

In addition, {\modelname} achieves good prediction scores even with fewer parameters than a majority of baselines. It also demonstrates the efficiency of {\modelname} when the numbers of patients, diseases, and visits in EHR datasets increase.

\subsection{Analysis for Diagnosis Prediction}
In medical practice, emerging diseases refer to new diseases in the future visit $T + 1$ that have not been diagnosed in the visit $T$. In the diagnosis prediction, it is natural for a model to predict diseases that have been diagnosed in the previous visit. However, a generic model should give consideration to both occurred diseases and emerging diseases for better risk predictions. Therefore, we evaluate the results of predicting emerging diseases to measure the ability of a model to capture the development scheme of diseases.

In this experiment, we choose MIMIC-III and divide the ground truth of the diagnosis prediction into two parts: persistent diseases and emerging diseases (emerging neighbor/unrelated), and thus obtain R@$k$ for each part respectively. We plot the prediction results including the mean and standard deviation of 5 runs on two parts in \figurename{~\ref{fig:occurred}}. Here, we select $k=[10, 20, 30, 40]$ to further analyze the trend of R@$k$ when $k$ increases. Note that the sum of mean values for two recall values on predicting persistent and emerging diseases should be equal to the corresponding R@$k$ in Table~\ref{tab:result_code}. We observe from \figurename{~\ref{fig:occurred}} that {\modelname} achieves the best recall values when predicting both persistent and emerging diseases. Specifically, compared to Table~\ref{tab:result_code}, we notice that the main improvement of {\modelname} over baselines results from predicting emerging diseases. We conclude that the combination graph and transition processes are beneficial for predicting emerging diseases but baselines fails to effectively utilize them. In addition, as $k$ increases, R@$k$ of emerging diseases is gradually close to R@$k$ of persistent diseases. It shows that emerging diseases play a more important role when considering a larger set of potential diagnoses in future visits. Therefore, it proves the importance of exploring the development scheme of diseases.

\begin{figure*}
    \centering
    \includegraphics[width=\linewidth]{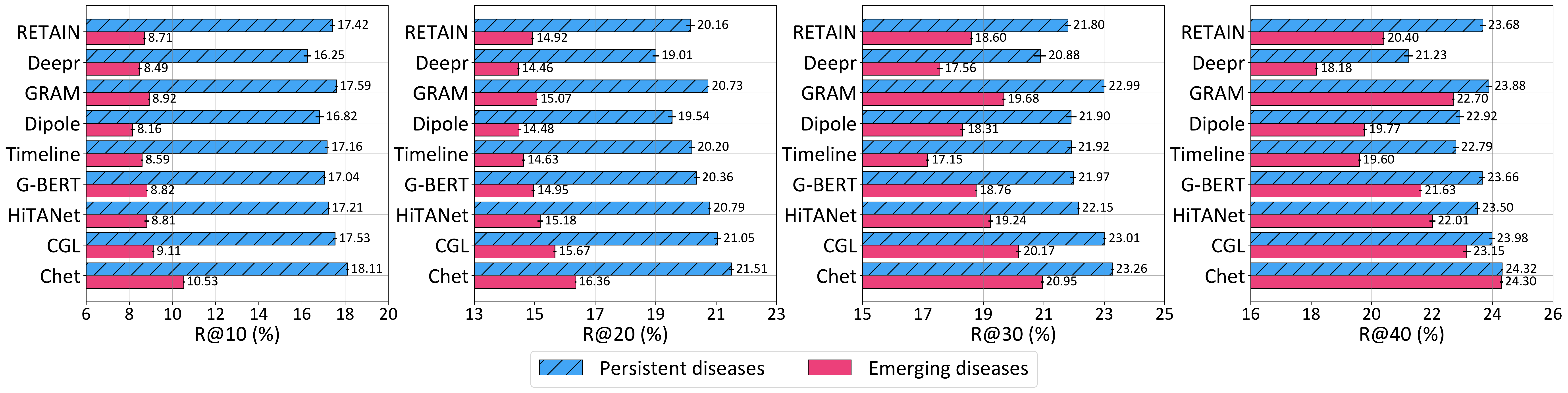}
    \caption{R@${k}$ of predicting persistent/emerging diseases for diagnosis prediction on the MIMIC-III dataset.}
    \label{fig:occurred}
\end{figure*}

\subsection{Ablation Study}
To further analyze the effectiveness of each module in {\modelname}, we also compare two variants of {\modelname}:
\begin{itemize}
    \item {\modelname}$_{d\text{-}}$: We remove the dynamic part of GNN in {\modelname}. Instead of using dynamic subgraphs $\mathcal{M}^t, \mathcal{B}^t, \hat{\mathcal{B}}^t, \mathcal{N}^t$ in Equations~(\ref{eq:graph1}) and (\ref{eq:graph2}), we adopt a universal embedding matrix for all diseases and use the global combination graph $\mathcal{A}$ for the aggregation of diagnoses and neighbors. In Equation~(\ref{eq:trans_2}), we project the universal embeddings into the dimension $p$ with a fully connected layer to replace the embeddings $\mathbf{R}$ for unrelated disease.
    \item {\modelname}$_{t\text{-}}$: We remove the transition functions in {\modelname}. Similar to {\gram} and {\dipole}, we use a direct sum of diagnosis embeddings $\mathbf{H}^t_D$ calculated by the GNN. In the meantime, we retain the dynamic subgraphs in Equation~(\ref{eq:graph1}) and remove Equation~(\ref{eq:graph2}), since it is no longer needed after removing the transition functions.
\end{itemize}
Table~\ref{tab:ablation} shows the results of diagnosis and heart failure prediction for {\modelname}, {\modelname}$_{d\text{-}}$, and {\modelname}$_{t\text{-}}$ on MIMIC-III. We report the mean value of 5 runs. We observe that removing the dynamic part of GNN and transition functions both lead to a decrease of the $F_1$, recall, and AUC values. It validates the effectiveness of dynamic learning for the disease combination graph and transition processes. Among these two variants, {\modelname}$_{d\text{-}}$ performs slightly better than {\modelname}$_{t\text{-}}$. We infer that it is because {\modelname}$_{d\text{-}}$ retains the general structure of {\modelname}, including the disease combination structure and transition functions. Moreover, after removing transition functions, {\modelname}$_{t\text{-}}$ has similar results to {\retain} and {\gram} in Tables~\ref{tab:result_code} and \ref{tab:result_hf}, which shows the significance of exploring the development scheme of diseases by transition functions.

\begin{table}
    \centering
    \small
    \begin{tabular}{lccc|cc}
        \toprule
        \multirow{2}{*}{\textbf{Models}} & \multicolumn{3}{c}{\textbf{Diagnosis}} &\multicolumn{2}{c}{\textbf{Heart failure}}\\
        \cmidrule{2-6}
        & \textbf{w-}$\boldsymbol{F_1}$ & \textbf{R@10} & \textbf{R@20} & \textbf{AUC} & $\boldsymbol{F_1}$ \\
        \midrule
        {\modelname}$_{d\text{-}}$ & 21.97 & 27.09 & 36.26 & 84.96 & 72.07 \\
        {\modelname}$_{t\text{-}}$ & 21.16 & 26.72 & 35.91 & 84.01 & 71.93 \\
        \modelname                 & \textbf{22.63} & \textbf{28.64} & \textbf{37.87} & \textbf{86.14} & \textbf{73.08} \\
        \bottomrule
    \end{tabular}
    \caption{Diagnosis prediction and heart failure prediction for {\modelname} variants on the MIMIC-III dataset.}
    \label{tab:ablation}
\end{table}
\section{Related Work}
\label{sec:related_works}
\paragraph{Health event prediction:} Deep learning methods have been widely adopted to predict health events. Choi \textit{et al.}~\cite{choi2016doctor} proposed DoctorAI by applying GRU to visit sequences to predict future diagnoses and time duration of the next visit. Choi \textit{et al.}~\cite{choi2016retain} proposed {\retain} to predict heart failure and provide interpretation for predictions. It combines two RNNs of different directions with attention. \deepr is proposed by Nguyen \textit{et al}.~\cite{Phuoc2017deepr} to predict readmission of patients in the next three months. It regards diagnoses in a visit as a sentence and applies one-dimensional CNN to extract features. Ma \textit{et al}.~\cite{ma2017dipole} proposed {\dipole} by applying various attention mechanisms on a bi-directional RNN to predict diagnoses of the next visit. Bai \textit{et al}.~\cite{bai2018interpretable} considered the time duration between two visits and proposed the Timeline model.  Luo \textit{et al}.~\cite{luo2020hitanet} used a self-attention-based method to encode the time information and detect key time steps among patients’ historical visits. However, these models do not utilize disease combination information in EHR data and thus cannot mine hidden disease patterns which help to predict future diagnoses.

\paragraph{Graph structure in healthcare:} Recently, graph structures have also shown effectiveness to model EHR data. {\gram}~\cite{choi2017gram} and {\gbert}~\cite{shang2019pretrain} both construct a medical ontology graph based on disease domain knowledge. Choi \textit{et al}. proposed GCT~\cite{gct_aaai20}, a graph convolutional transformer by constructing a graph of diagnoses, treatments, and lab results. Lu \textit{et al}.~\cite{lu2021collaborative} considered horizontal links in the medical ontology graph and constructed a patient-disease graph to learn hidden disease relations. Although these methods utilize various graph structures of EHR data, they do not model the development scheme of diseases.

\paragraph{Context learning in healthcare:} Lee \textit{et al}.~\cite{lee2018diagnosis} proposed MCA-RNN with demographics as context for patients. Shang \textit{et al}. regarded a visit as a sentence and applied a modified version of BERT~\cite{devlin2018bert}, {\gbert}, by removing position embeddings. Ma \textit{et al}.~\cite{ma2020concare} proposed ConCare by applying self-attention on visit sequences to extract personal context. However, these methods do not consider the dynamics of diseases and fail to capture the context in disease development.
\section{Conclusion}\label{sec:conclusion}
In this paper, we study challenges in health event predictions of utilizing disease combinations and exploring the development scheme of diseases. To address these challenges, we propose {\modelname} by designing a context-aware dynamic graph learning method to learn disease combinations and transition functions to model disease developments. A global disease combination graph is constructed for all diseases. Three dynamic subgraphs are extracted from the global graph for each visit. We adopt a dynamic graph neural network to learn global and local context from subgraphs of a visit and propose three transition functions to integrate historical context from previous visits. Experimental results on two real-world EHR datasets demonstrate the effectiveness of our proposed method on two health event prediction tasks. We also provide detailed analysis towards the diagnosis prediction results and validate the significance of learning disease combinations and transition processes. In the future, we will incorporate other data types in EHR, adopt medical ontology graphs as external knowledge, and explore effective ways to provide more interpretability in health event predictions.

\section*{Acknowledgements}
This work is supported in part by the US National Science
Foundation under grants 1948432 and 2047843. Any opinions,
findings, and conclusions or recommendations expressed in
this material are those of the authors and do not necessarily
reflect the views of the National Science Foundation.

\bibliography{ref}

\clearpage
\newpage
\appendix
\section{Subgraphs' Adjacency Matrix Calculation}
In our settings, the dimension of $\mathcal{M}^t, \mathcal{B}^t$, $\hat{\mathcal{B}}^t$, and $\mathcal{N}^t$ is ${d \times d}$. However, the disease number $d$ can be large in real practice. For example, there are nearly 13,000 medical codes ($d \sim 13,000$) in ICD-9-CM. In a real-world EHR dataset, MIMIC-III, there are still nearly 5,000 ($d \sim 5,000$) medical codes. When the patient number $|\mathcal{U}|$ and $d$ are large, there exist storage/memory problems even with sparse matrices:
\begin{itemize}
    \item \textbf{Storing the entire EHR dataset}. The dimension of the entire EHR dataset is $|\mathcal{U}| \times T \times d \times d$, if we store adjacency matrices for all visits. When $|\mathcal{U}|$ and $d$ increase, the size of the dataset will increase rapidly.
    \item \textbf{Training models with mini-batches}. When training a deep learning model, assuming the batch size is $b$, we have to load four batched matrices, $\mathcal{M}^t, \mathcal{B}^t$, $\hat{\mathcal{B}}^t$, and $\mathcal{N}^t$ whose dimensions are $b \times T \times d \times d$, into CPU/GPU memory, but it is usually not applicable when $d$ is large.
\end{itemize}
Based on the above analysis, we notice that the main bottlenecks are dynamic matrices $\mathcal{M}^t, \mathcal{B}^t$, $\hat{\mathcal{B}}^t$, and $\mathcal{N}^t$. Therefore, we consider replacing them with the diagnosis/neighbor vectors $\mathbf{m}^t, \mathbf{n}^t$ and the static adjacency matrix $\mathcal{A}$. 
\begin{theorem}
    \label{thm:a}
    The adjacency matrices $\mathcal{M}^t, \mathcal{B}^t$, $\hat{\mathcal{B}}^t$, and $\mathcal{N}^t$ for a visit $t$ can be represented as
    \begin{align}
        \mathcal{M}^t &= \mathbf{m}^t \odot \mathcal{A} \odot {\mathbf{m}^t}^\top, \notag \\
        \mathcal{B}^t &= \mathbf{m}^t \odot \mathcal{A} \odot {\mathbf{n}^t}^\top, \notag \\
        \hat{\mathcal{B}}^t &= \mathbf{n}^t \odot \mathcal{A} \odot {\mathbf{m}^t}^\top, \notag \\
        \mathcal{N}^t &= \mathbf{n}^t \odot \mathcal{A} \odot {\mathbf{n}^t}^\top.\notag 
    \end{align}
\end{theorem}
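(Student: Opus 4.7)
The plan is to prove each of the four identities by pointwise verification of entries, since the right-hand sides are built only from Hadamard (element-wise) products together with the fixed static matrix $\mathcal{A}$. First I would fix the broadcasting convention: for a column vector $\mathbf{v} \in \{0,1\}^d$ and a matrix $X \in \mathbb{R}^{d \times d}$, interpret $\mathbf{v} \odot X$ as the matrix with $(i,j)$-entry $\mathbf{v}_i X_{ij}$ (the indicator is broadcast down the rows), and $X \odot \mathbf{v}^\top$ as the matrix with $(i,j)$-entry $X_{ij} \mathbf{v}_j$ (broadcast along the columns). Under this convention each of the four claimed identities reduces to an entry-wise product of two $\{0,1\}$ indicators with $\mathcal{A}_{ij}$, and the proof becomes a simple case analysis over whether $c_i$ and $c_j$ are diagnoses, neighbors, or neither in visit $t$.

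For $\mathcal{M}^t$ the verification is immediate: $(\mathbf{m}^t \odot \mathcal{A} \odot {\mathbf{m}^t}^\top)_{ij} = \mathbf{m}^t_i \cdot \mathcal{A}_{ij} \cdot \mathbf{m}^t_j$, which equals $\mathcal{A}_{ij}$ exactly when both $c_i$ and $c_j$ are diagnoses in visit $t$ and is $0$ otherwise, matching the definition of $\mathcal{M}^t$ verbatim. The same template handles $\mathcal{N}^t$ by replacing $\mathbf{m}^t$ with $\mathbf{n}^t$ on both sides. The two bipartite matrices $\mathcal{B}^t$ and $\hat{\mathcal{B}}^t$ are treated symmetrically, each by substituting the appropriate indicator on each side.

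The one place where I would expect some care, and hence the main obstacle, is reconciling the indicator $\mathbf{n}^t_j$ with the definition of $\mathcal{B}^t$. By definition, $\mathcal{B}^t_{ij} = \mathcal{A}_{ij}$ requires three conditions: (a) $c_i$ is diagnosed in visit $t$, (b) $c_i$ and $c_j$ are connected in $\mathcal{G}$, and (c) $c_j$ is not diagnosed in visit $t$. The proposed formula $\mathbf{m}^t_i \cdot \mathcal{A}_{ij} \cdot \mathbf{n}^t_j$ enforces (a) through $\mathbf{m}^t_i$ and (b) through the factor $\mathcal{A}_{ij}$, which is zero whenever $c_i$ and $c_j$ are not connected, so the third factor only matters on edges of $\mathcal{G}$. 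It remains to argue that, conditional on (a) and (b) holding, the indicator $\mathbf{n}^t_j$ is equivalent to (c). For the ``only if'' direction, if $\mathbf{n}^t_j = 1$ then $c_j$ is a neighbor in visit $t$, which by the paper's definition implies $c_j$ is not diagnosed, giving (c). For the ``if'' direction, if (a), (b), (c) all hold, then $c_j$ is connected to a diagnosis $c_i$ and is itself undiagnosed, so $c_j$ qualifies as a neighbor in visit $t$ and $\mathbf{n}^t_j = 1$. Hence the product $\mathbf{m}^t_i \cdot \mathcal{A}_{ij} \cdot \mathbf{n}^t_j$ is nonzero precisely when (a)-(c) hold simultaneously, and equals $\mathcal{A}_{ij}$ on that set, matching $\mathcal{B}^t_{ij}$. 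The identity for $\hat{\mathcal{B}}^t$ follows by swapping the roles of diagnosis and neighbor indicators, and $\mathcal{N}^t$ follows by using $\mathbf{n}^t$ on both sides with an analogous argument.
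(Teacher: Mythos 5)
Your proposal is correct and follows essentially the same route as the paper: an entry-wise case analysis showing that the product of the two $\{0,1\}$ indicators with $\mathcal{A}_{ij}$ vanishes exactly when the defining conditions of each subgraph fail (the paper likewise splits the zero case into ``wrong role'' versus ``not connected, so $\mathcal{A}_{ij}=0$''). Your extra care in arguing that, on the support of $\mathcal{A}_{ij}$, the global indicator $\mathbf{n}^t_j$ coincides with ``$c_j$ is undiagnosed'' is a welcome tightening of the same argument, not a different one.
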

\begin{proof}
    Here, we only give proof for $\mathcal{B}^t = \mathbf{m}^t \odot \mathcal{A} \odot {\mathbf{n}^t}^\top$. The other three equations can be proven in a similar manner.
    
    For $\forall c_i, c_j \in \mathcal{C}$, when $\mathcal{B}^t_{ij} = 1$, according to the definition of $\mathcal{B}^t$, we can infer that $c_i$ is a diagnosis, and $c_j$ is a neighbor of $c_i$, and $\mathbf{m}^t_i = 1$, ${\mathbf{n}^t_j} = 1$, and $\mathcal{A}_{ij} = 1$. Therefore, $(\mathbf{m}^t \odot \mathcal{A} \odot {\mathbf{n}^t}^\top)_{ij} = 1$. When $\mathcal{B}^t_{ij} = 0$, there are two cases:
    \begin{enumerate}
        \item $c_i$ is not a diagnosis node or $c_j$ is not a neighbor node in visit $t$. In this case, we have $\mathbf{m}^t_i = 0$ or ${\mathbf{n}^t_j} = 0$. Therefore, we can deduce $(\mathbf{m}^t \odot \mathcal{A} \odot {\mathbf{n}^t}^\top)_{ij} = 0$.
        \item $c_i$ is a diagnosis node and $c_j$ is not a neighbor of $c_i$ but a neighbor of other diagnoses. We can infer $\mathcal{A}_{ij} = 0$. Therefore, we can also get $(\mathbf{m}^t \odot \mathcal{A} \odot {\mathbf{n}^t}^\top)_{ij} = 0$.
    \end{enumerate}
    Since $\mathcal{B}^t_{ij} = (\mathbf{m}^t \odot \mathcal{A} \odot {\mathbf{n}^t}^\top)_{ij}$ holds for all pairs, we can conclude that $\mathcal{B}^t = \mathbf{m}^t \odot \mathcal{A} \odot {\mathbf{n}^t}^\top$.
\end{proof}
Based on Theorem~\ref{thm:a}, when storing an EHR dataset, we only need to store $\mathbf{m}^t$ and $\mathbf{n}^t$ for each visit, without keeping four adjacency matrices. However, when training models, if we directly calculate $\mathbf{m}^t \odot \mathcal{A} \odot {\mathbf{n}^t}^\top$ and other three matrices, it will bring a matrix whose dimension is still $b \times T \times d \times d$, because $\mathbf{m}^t \odot \mathcal{A} \odot {\mathbf{n}^t}^\top \in \mathbb{R}^{d \times d}$ is different for different visits and patients. Therefore, we need more derivations for aggregations in the graph layer beyond manipulating adjacency matrices.

\begin{theorem}
    \label{thm:b}
    The adjacency matrices for subgraphs in neighborhood aggregation, i.e., Equations~(\ref{eq:graph1}) and (\ref{eq:graph2}), can be replaced by $\mathcal{A}$ without introducing matrices of dimensions $b \times T \times d \times d$.
\end{theorem}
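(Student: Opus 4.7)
The plan is to substitute each of the four dynamic adjacency matrices using Theorem~\ref{thm:a} and then simplify the matrix products $\mathcal{M}^t\mathbf{M}$, $\mathcal{B}^t\mathbf{N}$, $\hat{\mathcal{B}}^t\mathbf{M}$, and $\mathcal{N}^t\mathbf{N}$ by pushing the outer Hadamard factors $\mathbf{m}^t, {\mathbf{m}^t}^\top, \mathbf{n}^t, {\mathbf{n}^t}^\top$ outside the matrix product. The key algebraic observation, which I would first state and verify entrywise, is the identity
\begin{equation*}
    (\mathbf{u} \odot \mathcal{A} \odot \mathbf{v}^\top)\,\mathbf{X} \;=\; \mathbf{u} \odot \bigl(\mathcal{A}(\mathbf{v} \odot \mathbf{X})\bigr),
\end{equation*}
valid for any column vectors $\mathbf{u},\mathbf{v}\in\mathbb{R}^d$ and matrix $\mathbf{X}\in\mathbb{R}^{d\times s}$. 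Checking the $(i,k)$-entry on both sides reduces to the same sum $\mathbf{u}_i \sum_j \mathcal{A}_{ij}\mathbf{v}_j \mathbf{X}_{jk}$, so the identity holds. Instantiating it with $(\mathbf{u},\mathbf{v},\mathbf{X})$ equal to $(\mathbf{m}^t,\mathbf{m}^t,\mathbf{M})$, $(\mathbf{m}^t,\mathbf{n}^t,\mathbf{N})$, $(\mathbf{n}^t,\mathbf{m}^t,\mathbf{M})$, and $(\mathbf{n}^t,\mathbf{n}^t,\mathbf{N})$ rewrites the four aggregation terms in terms of $\mathcal{A}$ alone.

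Next I would assemble the two update equations. For $\mathbf{Z}^t_D$, the self-loop term $\mathbf{m}^t\odot\mathbf{M}$ combines with the two rewritten aggregations $\mathbf{m}^t\odot\mathcal{A}(\mathbf{m}^t\odot\mathbf{M})$ and $\mathbf{m}^t\odot\mathcal{A}(\mathbf{n}^t\odot\mathbf{N})$, and I would factor out the common $\mathbf{m}^t\odot$ prefix to obtain the claimed expression. The factoring relies on the idempotence of the binary mask ($\mathbf{m}^t\odot\mathbf{m}^t=\mathbf{m}^t$), which lets us write $\mathbf{m}^t\odot\mathbf{M}=\mathbf{m}^t\odot(\mathbf{m}^t\odot\mathbf{M})$ so that all three summands share the same outer broadcast. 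The computation for $\mathbf{Z}^t_N$ is symmetric with $\mathbf{m}^t\leftrightarrow\mathbf{n}^t$ and $\mathbf{M}\leftrightarrow\mathbf{N}$.

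Finally I would verify the memory claim. In the rewritten form the only matrix of size $d\times d$ is the static $\mathcal{A}$, which is shared across every patient, visit, and batch element. The batch-dependent quantities that actually appear are $\mathbf{m}^t\odot\mathbf{M}$ and $\mathbf{n}^t\odot\mathbf{N}$, both of shape $d\times s$; after left-multiplication by $\mathcal{A}$ they remain $d\times s$, and the outer Hadamard mask preserves this shape. Consequently the batch tensors have dimension $b\times T\times d\times s$ rather than $b\times T\times d\times d$, which is the desired conclusion since $s\ll d$.

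The main obstacle I anticipate is purely notational: the Hadamard operator $\odot$ is being used with broadcasting (column vector against matrix on the left, row vector against matrix on the right), so I would be careful to spell out that $(\mathbf{u}\odot\mathbf{X})_{jk}=\mathbf{u}_j\mathbf{X}_{jk}$ and $(\mathcal{A}\odot\mathbf{v}^\top)_{ij}=\mathcal{A}_{ij}\mathbf{v}_j$ before invoking the key identity. Once the broadcasting conventions are fixed, the algebra is a one-line index computation and the memory accounting follows immediately.
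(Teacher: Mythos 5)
Your proposal is correct and follows essentially the same route as the paper: both substitute the Theorem~\ref{thm:a} factorizations into the aggregation terms and push the Hadamard masks through the matrix product so that only the static $\mathcal{A}$ remains $d\times d$, the paper doing this via a diagonal matrix $\mathbf{D}_{\mathbf{n}^t}$ and you via an equivalent entrywise index check. The only differences are cosmetic --- you work out all four terms and both assembled equations explicitly (the paper treats one term and appeals to symmetry), and the idempotence step you invoke is not actually needed since factoring out $\mathbf{m}^t\odot$ is just distributivity over the sum.
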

\begin{proof}
    We also only give proof for the item $\mathcal{B}^t\mathbf{N}$. According to Theorem~\ref{thm:a}, we have
    \begin{align}
        \mathcal{B}^t\mathbf{N} &= (\mathbf{m}^t \odot \mathcal{A} \odot {\mathbf{n}^t}^\top)\mathbf{N} \notag \\
        &= (\mathbf{m}^t \odot \mathcal{A}\mathbf{D}_{{\mathbf{n}^t}})\mathbf{N} \notag \\
        &= (\mathbf{m}^t \odot \mathcal{A})(\mathbf{D}_{{\mathbf{n}^t}}\mathbf{N}) \notag \\
        &= (\mathbf{m}^t \odot \underbrace{(\mathcal{A}\underbrace{(\mathbf{n}^t \odot \mathbf{N})}_{d \times s})}_{d\times s}. \notag
    \end{align}
    Here, $\mathbf{D}_{{\mathbf{n}^t}}$ is a diagonal matrix based on $\mathbf{n}^t$. The dimension of $\mathbf{n}^t  \odot \mathbf{N}$ is $d \times s$. After multiplying $\mathcal{A}$, the dimension is still $d \times s$. Since $\mathcal{A}$ is the same along all visits and patients, this equation only has matrices whose dimensions are $b \times T \times d \times s$ during training models. Therefore, when $s \ll d$, we can avoid introducing matrices of which dimensions are $b \times T \times d \times d$.
\end{proof}

By applying Theorems~\ref{thm:a} and \ref{thm:b}, we can optimize the storage/memory problems. The optimized graph layer for Equations~(\ref{eq:graph1}) and (\ref{eq:graph2}) is summarized as follows:
\begin{align}
    \mathbf{Z}^t_D &= \mathbf{m}^t \odot (\mathbf{M} + \mathcal{A}(\mathbf{m}^t \odot \mathbf{M}) + \mathcal{A}(\mathbf{n}^t \odot \mathbf{N})), \notag \\
    \mathbf{Z}^t_N &= \mathbf{n}^t \odot (\mathbf{N} + \mathcal{A}(\mathbf{n}^t \odot \mathbf{N}) + \mathcal{A}(\mathbf{m}^t \odot \mathbf{M})). \notag
\end{align}

\section{Pseudo-code of \modelname}
\begin{algorithm}[t]
    \caption{{\modelname}($\mathcal{A}$, $r_u$, $\mathbf{M, N, R}$)}\label{algo:model_code}
    \newcommand\mycommfont[1]{\footnotesize\ttfamily\textcolor{blue}{#1}}
    \SetCommentSty{mycommfont}
    \DontPrintSemicolon
    \SetAlgoLined
    \SetKwInOut{Input}{Input}
    \SetKwInOut{Output}{Output}
    \Input{Adjacency matrix $\mathcal{A}$ for the combination graph; \newline
        A visit sequence $r_u$ for a patient; \newline
        Embedding matrices $\mathbf{M, N, R}$ for diseases}
    \Output{The patient embedding $\mathbf{o}$}
    $\mathbf{m}^1, \mathbf{m}^2, \dots, \mathbf{m}^T \leftarrow$ Retrieve diagnoses in each visit from $r_u$ \\
    $\mathbf{n}^1, \mathbf{n}^2, \dots, \mathbf{n}^T \leftarrow$ Calculate neighbors based on the combination graph and diagnoses \\
    $\mathbf{h}_\text{p}^0 \leftarrow \mathbf{{0}}$ \\
    \For{$t \leftarrow 1$ to $T$}{
        \tcp{Optimized dynamic graph layer}
        $\mathbf{Z}^t_{\{D, N\}} \leftarrow$ Aggregate global/local context with the optimized graph layer with $\mathbf{M, N}$, and $\mathcal{A}$ \\
        $\mathbf{H}^t_{\{D,N\}} \leftarrow$ Calculate hidden embeddings for diagnoses and neighbors \\
        \tcp{Transition functions}
        $\mathbf{m}^t_\text{p}, \mathbf{m}^t_{\text{en}}, \mathbf{m}^t_{\text{eu}} \leftarrow$ Divide diagnosis roles from $\mathbf{m}^t$ \\
        \eIf{$t = 1$}{
            $\mathbf{h}^1_\text{p} \leftarrow \text{GRU}(\mathbf{m}_\text{p}^1 \odot \mathbf{H}^1_D, \mathbf{h}^0_\text{p})$ \\
        }
        {
            $\mathbf{h}_{\text{en}}^t \leftarrow \text{Attention}(\mathbf{m}_{\text{en}}^t \odot \mathbf{H}^{t-1}_N, \mathbf{m}_{\text{en}}^t \odot \mathbf{H}^{t-1}_N, \mathbf{m}_{\text{en}}^t \odot \mathbf{H}^t_D)$ \\
            $\mathbf{h}_{\text{eu}}^t \leftarrow \text{Attention}(\mathbf{m}_{\text{eu}}^t \odot \mathbf{R}, \mathbf{m}_{\text{eu}}^t \odot \mathbf{R}, \mathbf{m}_{\text{eu}}^t \odot \mathbf{H}^t_D)$ \\
            $\mathbf{h}^t_\text{p} \leftarrow \text{M-GRU}(\mathbf{m}_\text{p}^t \odot \mathbf{H}^t_D, \mathbf{h}_{\text{en}}^t, \mathbf{h}_{\text{eu}}^t, \mathbf{h}^{t - 1}_\text{p})$ \\
        }
        $\mathbf{v}^t \leftarrow \text{max\_pooling}(\mathbf{h}^t_\text{p})$ \\
    }
    $\mathbf{o} \leftarrow$ Calculate a weighted sum of $\mathbf{v}^1, \mathbf{v}^2, \dots, \mathbf{v}^T$ with a location-based attention \\
    \Return $\mathbf{o}$ \\
\end{algorithm}

In summary, we demonstrate the pseudo-code of {\modelname} including the dynamic graph learning and temporal learning in Algorithm~\ref{algo:model_code}. For each patient, the inputs of {\modelname} are a pre-constructed global combination graph of diseases, visit records of this patient, and initialized embedding matrices for diseases as diagnoses, neighbors, and unrelated diseases. At lines 5-6, we execute the optimized dynamic graph learning for each visit. At lines 7-14, transition functions are conducted for different diagnosis roles. Finally, we calculate the patient embedding as the model output of {\modelname}.

\section{Baselines}
To compare {\modelname} with state-of-the-art models, we select the following methods as baselines:
\begin{itemize}
    \item \retain~\cite{choi2016retain}: A network containing two RNNs of different directions with an attention method.
    \item \deepr~\cite{Phuoc2017deepr}: A one-dimensional CNN with max pooling.
    \item \gram~\cite{choi2017gram}: An RNN model with an attention method on the medical ontology graph.
    \item \dipole~\cite{ma2017dipole}: A bi-directional RNN model with attention mechanisms.
    \item \timeline~\cite{bai2018interpretable}: An RNN model with an attention method and time decay function on visit durations.
    \item \gbert~\cite{shang2019pretrain}: A BERT-based model with attention on the medical ontology graphs of diseases and medications. Here, we modify {\gbert} by removing the pre-training process and the medication module to make a fair comparison.
    \item \hitanet~\cite{luo2020hitanet}: A Transformer-based model considering time intervals between admissions. The inputs are multi-hot vectors.
    \item \cgl~\cite{lu2021collaborative}: An RNN model with a medical ontology graph and a patient-disease graph. Here, we do not use clinical notes in {\cgl} for a fair comparison.
\end{itemize}
Note that, we do not compare with traditional machine learning methods such as logistic regression because they have been proven not good in \retain~\cite{choi2016retain} and \deepr~\cite{Phuoc2017deepr}.

Detailed parameter settings for baselines are given as follows:
\begin{itemize}
    \item \retain: The embedding size for visits is 256. Hidden units for two RNN layers are 128.
    \item \deepr: The embedding size for medical codes is 100. The kernel size and filter number for an 1-D CNN layer are 3 and 128, respectively.
    \item \gram: The embedding size for medical codes is 100. The attention size is 100. Hidden units for RNN is 128.
    \item \dipole: The embedding size for visits is 256. We choose concatenation-based attention and the attention size is 128. Hidden units for RNN is 128.
    \item \timeline: The embedding size for medical codes is 100. The attention size is 100.  Hidden units for RNN is 128.
    \item \gbert: The parameters are the same as \cite{shang2019pretrain}. (1) GAT: The embedding size for medical codes is 75, the number of attention heads is 4; (2) BERT: the hidden size is 300. The position-wise feed-forward networks include 2 hidden layers with 4 attention heads for each layer. The dimension of each hidden layer is 300.
    \item \hitanet: The parameter are the same as \cite{luo2020hitanet}. (1) {\hitanet}: the dense space size for diseases is 256. The space size for time interval, query vector size, and latent space size for time interval are 64. (2) Transformer: the attention size is 64. The number of attention heads is 4. The size of middle feed-forward network is 1024.
    \item \cgl: The embedding size for diseases and patients are 32 and 16. The graph layer number is 2. The hidden size for patients and diseases are 32, 64, and 128. Hidden units for RNN are 200.
\end{itemize}

\end{document}